\def\eqref#1{equation~\ref{#1}}
\def\1{\bm{1}}
\DeclareMathAlphabet{\mathsfit}{\encodingdefault}{\sfdefault}{m}{sl}
\SetMathAlphabet{\mathsfit}{bold}{\encodingdefault}{\sfdefault}{bx}{n}
\newcommand{\E}{\mathbb{E}}
\newcommand{\R}{\mathbb{R}}
\DeclareMathOperator*{\argmin}{arg\,min}
\newcommand{\wh}{\widehat}
\newcommand{\wt}{\widetilde}
\newcommand{\bigo}{\mathcal{O}}
\newtheorem{theorem}{Theorem}[section]
\newcommand*\circled[1]{\tikz[baseline=(char.base)]{\node[shape=circle,draw,inner sep=1pt] (char) {#1};}}
\newcommand*\texttcircled[1]{\raisebox{.5pt}{\textcircled{\raisebox{-.9pt} {#1}}}}
\title{Diffusing Differentiable Representations}
\author{%
Yash Savani\\
Carnegie Mellon University\\
\texttt{ysavani@andrew.cmu.edu} \\
\And
Marc Finzi \\
Carnegie Mellon University\\
\texttt{mfinzi@andrew.cmu.edu} \\
\And
J. Zico Kolter \\
Carnegie Mellon University\\
\texttt{zkolter@andrew.cmu.edu} \\
}
\begin{document}

\maketitle

\begin{abstract}
We introduce a novel, training-free method for sampling \emph{differentiable representations} (diffreps) using pretrained diffusion models. Rather than merely mode-seeking, our method achieves sampling by ``pulling back'' the dynamics of the reverse-time process---from the image space to the diffrep parameter space---and updating the parameters according to this pulled-back process. We identify an implicit constraint on the samples induced by the diffrep and demonstrate that addressing this constraint significantly improves the consistency and detail of the generated objects. Our method yields diffreps with substantially \textbf{improved quality and diversity} for images, panoramas, and 3D NeRFs compared to existing techniques. Our approach is a general-purpose method for sampling diffreps, expanding the scope of problems that diffusion models can tackle.
\end{abstract}

\section{Introduction}

Diffusion models have emerged as a powerful tool for generative modeling \citep{ho_denoising_2020,song_denoising_2022,song_score-based_2021,karras_elucidating_2022,rombach_high-resolution_2022}, and subsequent work has extended these models to generate complex objects---such as 3D assets, tiled images, and more.  Although such approaches typically require training or at least fine-tuning of the diffusion models for these new modalities \citep{wang_prolificdreamer_2023,luo_diff-instruct_2023}, two notable exceptions, \citet{wang_score_2022} and \citet{poole_dreamfusion_2022}, have developed methods for \emph{training-free} production of 3D objects by \emph{directly} using an image-based diffusion model.  Both methods work by optimizing a \emph{differentiable representation} (diffrep)---in this case, a Neural Radiance Field (NeRF) \citep{mildenhall_nerf_2020}---to produce rendered views consistent with the output of the image-based diffusion model.  Unfortunately, the nature of both these methods is that they optimize the diffrep to produce the ``most likely'' representation consistent with the images; that is, they perform \emph{mode-seeking} rather than actually \emph{sampling} from the diffusion model.  This results in overly smoothed outputs that lack detail and do not reflect the underlying distribution of the diffusion model.

In this paper, we present a novel method for sampling \emph{directly} in the diffrep space using pretrained diffusion models.  The method is training-free, can handle arbitrary diffreps, and performs \emph{true sampling} according to the underlying diffusion model rather than merely mode-seeking.  The key idea of our approach is to rewrite the reverse diffusion process itself in the diffrep parameter space.  This is achieved by ``pulling back'' the dynamics of the reverse time process---from the image space to the parameter space---and solving a (small) optimization problem, implied by the pulled-back dynamics, over the parameters for each diffusion step.  To further encourage solver convergence, we identify constraints that the diffrep induces on the samples and use these constraints to guide the reverse process to generate samples from the high-density regions of the diffusion model while satisfying the constraints along the sampling trajectory.

Our experiments use a pretrained image-based diffusion model (Stable Diffusion 1.5 \citep{rombach_high-resolution_2022}) to generate samples from various classes of diffreps.  For example, by sampling SIREN representations \citep{vincent_connection_2011} with wrap-around boundary constraints, we can sample total 360-degree panorama scenes, and by sampling NeRF representations \citep{mildenhall_nerf_2020}, we can generate 3D models of many objects. In both settings---as well as in baseline comparisons on simple SIREN-based (non-panorama) image generation---our approach substantially outperforms previous methods such as Score Jacobian Chaining (SJC) \citep{wang_score_2022}.  Though the problem setting is considerably more difficult without fine-tuning or retraining, sampling diffreps using pretrained diffusion models with our method substantially improves and extends the state-of-the-art in training-free generation methods.

\section{Related work and preliminaries}

\subsection{Differentiable representations (diffreps)}\label{sec:diffrep}

Diffreps are a powerful tool for representing complex scenes or manifolds using parameterized differentiable functions to map coordinates of the scene into a feature space that encodes the properties of the scene at those coordinates. Popular instantiations of diffreps include:
\begin{itemize}
\item \textbf{SIRENs} \citep{sitzmann_implicit_2020}, which implicitly model an image using an MLP with sinusoidal activations to map 2D $(x,y)$ pixel coordinates into RGB colors.
\item \textbf{NeRFs} \citep{mildenhall_nerf_2020}, which implicitly model a 3D scene using an MLP that transforms 3D $(x,y,z)$ voxel coordinates with view directions $(\phi, \psi)$ into RGB$\sigma$ values. We can render images from different views of the scene by numerically integrating the NeRF outputs along the unprojected rays from the camera.
\end{itemize}
Faster or alternative diffreps also exist for 3D scenes---such as the InstantNGP \citep{muller_instant_2022} model or Gaussian splats \citep{kerbl_3d_2023}---although in this work we focus on the basic NeRF architecture. Many other kinds of visual assets can also be formulated as diffreps. For example, we can use diffreps to model panoramas, spherical images, texture maps for 3D meshes, compositions of multiple images, scenes from a movie, and even the output of kinematic and fluid simulations.

Many interesting diffreps can be used to render not just one image from the scene but \emph{multiple coupled images} or even a \emph{distribution over images}.
Given a diffrep, parameterized by $\theta\in\Theta$, for a scene\mbox{---}such as a SIREN panorama or a NeRF---we can render an image of the scene from a view $\pi \in \Pi$ using a differentiable render function $f(\theta, \pi) = \texttt{image} \in \mathcal X$. To accommodate the multiview setting in our discussion, we consider the ``curried'' Haskell form of the render function $f: \Theta \to (\Pi \to \mathcal{X})$, where $f(\theta):\Pi \to \mathcal X$ is a map itself from a view $\pi \in \Pi$ to $f(\theta)(\pi) = f(\theta, \pi) = \texttt{image} \in \mathcal X$.

In the case of NeRFs or SIREN panoramas, the view $\pi$ is a continuous random variable drawn from a distribution that---with abuse of notation---we will also call $\Pi$. To formalize this, let $\mathcal H \subseteq \mathcal{X}^\Pi$ be a vector space of functions from $\Pi$ to $\mathcal X$. With this definition, we can write the signature of $f$ as $f:\Theta\to \mathcal H$. Although $\mathcal H$ is usually larger than $\mathcal X$ (and can be even infinite in some cases), we can simplify our notation by equipping $\mathcal H$ with an inner product to make it a Hilbert space. This allows us to use familiar matrix notation with $\mathcal H$ and hide the view dependence of $f(\theta)$ in $\mathcal H$. 

To complete this formalization, we lift the inner product on $\mathcal X$ to define an inner product on $\mathcal H$:
\begin{equation*}\label{eq:inner_prod}
    \forall h,g \in \mathcal H: \quad  \langle h,g\rangle := \mathbb{E}_{\pi \sim \Pi} [\langle h(\pi),g(\pi)\rangle] =\mathbb{E}_{\pi \sim \Pi} [h(\pi)^\top g(\pi)].
\end{equation*}
This formulation allows us to handle the multiview nature of NeRFs and SIREN panoramas in a mathematically rigorous way while preserving the convenient notation.

The partial application of $f$ returns an entire view-dependent function $f(\theta) \in \mathcal H$. Given a set of images $(x_i)_{i\in [N]}$ of the scene from different views $(\pi_i)_{i\in [N]}$, the standard diffrep task is to find a $\theta \in \Theta$ that minimizes $\mathcal L(\theta) = \sum_{i\in [N]}\|f(\theta)(\pi_i) - x_i\|$. Because the diffreps and $f$ are both differentiable, this can be accomplished using first-order solvers, such as gradient descent or Adam.

 We identify $\mathcal H$ with $\mathcal X$ in the following sections for pedagogical convenience. This identification simplifies the presentation, provides a more interpretable and intuitive perspective on our method, and is precise when $\Pi$ is a singleton. Because $\mathcal H$ is a Hilbert space, the main points of our arguments still hold when we consider a larger $\Pi$. We describe the specific changes needed to adapt our method to the general multiview setting when $\mathcal H \not= \mathcal X$, in \autoref{sec:stoch_views}.

\subsection{Diffusion models}

Diffusion models implicitly define a probability distribution via ``reversing'' a forward noising process.  While many different presentations of image diffusion models exist in the literature (e.g.,~DDPM \citep{ho_denoising_2020}, DDIM \citep{song_denoising_2022}, Score-Based Generative Modeling through SDEs \citep{song_score-based_2021}, EDM \citep{karras_elucidating_2022}), they are all equivalent for our intended purposes.

Given a noise schedule $\sigma(t)$ for $t\in[0, T]$ and a score function $\nabla \log p_t(x(t))$ for the distribution $p_t$ over noisy images $x(t)$ at time $t$, we can sample from $p_0$ by first initializing $x(T) \sim \mathcal N(0,\sigma^2(T) I)$ and then following the reverse time probability flow ODE (PF-ODE) given in \citet{karras_elucidating_2022} to transform the easy-to-sample $x(T)$ into a sample from $p_0$:
\begin{equation}\label{eq:ode_sampling}
    \frac{dx}{dt} = -\dot \sigma(t) \sigma(t) \nabla \log p_t(x(t)).
\end{equation}
The PF-ODE is constructed so that the perturbation kernel (conditional distribution) is given by $p_{0t}(x(t)|x(0)) = \mathcal N(x(t); x(0), \sigma^2(t) I)$. Using the reparameterization trick, we can write this as $x(t) = x(0) + \sigma(t)\epsilon$, where $\epsilon\sim\mathcal N(0,I)$.

We can approximate the score function with a (learned) noise predictor $\hat\epsilon(x(t), t) \approx \tfrac{x(t) - \E[x(0)|x(t)]}{\sigma(t)}$ using the Tweedie formula \citep{robbins_empirical_1956,efron_tweedies_2011}:
\begin{equation}\label{eq:tweedie}
    \nabla \log p_t(x(t)) = \tfrac{\E[x(0)|x(t)] - x(t)}{\sigma^2(t)} \approx  -\tfrac{\hat{\epsilon}(x(t), t)}{\sigma(t)}.
\end{equation}
The noise predictor is trained by minimizing the weighted noise reconstruction loss:
\begin{equation}\label{eq:score_model}
\mathcal L(\hat\epsilon) = \E_{\substack{t\sim U(0,T), \\ x\sim p_\text{data},\\ \epsilon \sim \mathcal N(0,I)}}\left[w(t)\left\|\hat\epsilon(x + \sigma(t)\epsilon, t) - \epsilon\right\|^2\right].
\end{equation}

\subsection{Training differentiable representations using diffusion priors}

\subsubsection{Training-free methods}
\citet{poole_dreamfusion_2022} laid the foundation, demonstrating that generating 3D assets from purely 2D image-trained diffusion models was \emph{even possible}. In \textbf{DreamFusion (SDS)}, they perform gradient ascent using $\mathbb{E}_{t,\epsilon}[w(t)J^\top(\hat\epsilon(f(\theta) + \sigma(t)\epsilon, t)-\epsilon)]$, derived from the denoising objective, where $J$ is the Jacobian of the differentiable render function $f$ and $\hat{\epsilon}$ is the learned noise predictor. 

Independently, \citet{wang_score_2022} introduced \textbf{Score Jacobian Chaining (SJC)}, which performs gradient ascent using $\nabla \log p(\theta):= \mathbb{E}_{t,\epsilon}[J^\top \nabla\log p_t(f(\theta)+\sigma(t)\epsilon)]$ with a custom sampling schedule for the $t$s. The custom schedule can be interpreted as gradient annealing to align the implicit $\sigma(t)$ of the diffrep with the $t$ used to evaluate the score function.

\paragraph{Comparison of methods}
Using the Tweedie formula from \autoref{eq:tweedie} we can rewrite the SDS objective as $\mathbb{E}_{t,\epsilon}\left[\tfrac{w(t)}{\sigma(t)}J^\top \nabla \log p_t(f(\theta)+\sigma(t)\epsilon)\right]$\footnote{Note that $\mathbb{E}[\epsilon] = 0$, so the additional $-\epsilon$ term in the SDS objective behaves as an unbiased variance reduction term for the score estimate since $\text{Cov}(\hat \epsilon, \epsilon) > 0$.}. This expression is identical to the $\nabla \log p(\theta)$ term from SJC if we let $w(t) = \sigma(t)$ for all $t$. Both methods follow this gradient to convergence, which leads to a critical point---a local maximum or \emph{mode}---in the $\log p(\theta)$ landscape. To approximate the objective, both approaches use Monte Carlo sampling.


While gradient ascent (GA) on the transformed score resembles solving the PF-ODE--particularly in their discretized forms--they serve fundamentally different purposes. GA seeks to find the mode of the distribution, while the PF-ODE generates actual samples from the distribution.


It is possible to reparameterize the GA procedure in a way that resembles solving the PF-ODE by carefully selecting optimizer hyperparameters and stopping times. However, this reparameterization is \textbf{highly specific}. The crucial difference lies in the interpretation of the trajectory: in GA, the path taken is typically \emph{irrelevant} as long as it converges to the critical point, allowing us the flexibility to multiply the gradient by any positive semidefinite (PSD) matrix without affecting the solution. 

For the PF-ODE, however, the \textbf{trajectory is everything}. The stopping time, weighting terms, and score evaluations must be precisely aligned to ensure that the solution looks like a typical sample. Even small deviations in the path, like evolving the PF-ODE for too long or too short, can lead to either \emph{oversmoothed} samples (with too much entropy removed) or \emph{under-resolved}, noisy samples. Thus, while GA might superficially look like a discretized version of the PF-ODE, the underlying processes are inherently distinct: one finds the mode, and the other describes the full trajectory required to generate samples from the distribution. In the case of SDS and SJC, they both use Adam as the optimizer, ignoring any of the nuanced considerations needed to transform the PF-ODE into the corresponding optimization procedure.

In practice, this distinction means that while both SDS and SJC may yield plausible diffreps at \emph{high classifier-free guidance levels}, they tend to falter when producing coherent samples under lower CFG weights--particularly for complex, multimodal distributions with many degrees of freedom. High CFG levels may mask these limitations by strongly biasing the output towards the mode, which aligns more closely with the GA approach, but at the cost of \textbf{reduced sample diversity} and potentially \textbf{missed details} captured by the full distribution.

Furthermore, in optimization, we can multiply the gradients with any positive semidefinite (PSD) matrix without changing the solution. However, this flexibility does not extend to the score when solving the PF-ODE. As we will discuss later, the SJC objective is missing a key multiplicative PSD term, which impacts its performance.


\paragraph{The problem with mode-seeking}
To appreciate the limitations of using the mode as a proxy for sampling, we consider the multi-modal and the high-dimensional settings. 

In the \textbf{multimodal} setting, the sample distribution may contain several distinct high-density regions. However, mode-seeking algorithms focus on only one of these regions, sacrificing sample diversity.

In low-dimensional spaces, the mode often resembles a typical sample. However, this intuition breaks down in \textbf{high-dimensional} spaces ($d \gg 1$), such as the space of images. This counterintuitive behavior can be explained using the \emph{thin-shell phenomenon}. For a high-dimensional standard Gaussian distribution, samples are not concentrated near the mode (the origin), as one might expect. Instead, they predominantly reside in an exponentially thin shell at the boundary of a $\sqrt{d}$-radius ball centered on the mode. This phenomenon explains why the mode can be an anomalous and atypical point in high-dimensional distributions.

As an illustrative example, consider sampling a normalized ``pure-noise'' image. Despite having a mode of $0$, we would almost never expect to sample a uniformly gray image. This provides some insight as to why the mode of a distribution with several degrees of freedom will lack the quality and details present only in samples from the thin shell.

\paragraph{Recent developments}
Several subsequent works have built upon the SDS and SJC methods by incorporating additional inputs, fine-tuning, and regularization to enhance the quality of the generated 3D assets.
Zero-1-to-3 \citep{liuZero1to3ZeroshotOne2023} expands on SJC by fine-tuning the score function to leverage a single real input image and additional view information.
Magic 123 \citep{qianMagic123OneImage2023} further builds on this by incorporating additional priors derived from 3D data.
Fantasia3D \citep{chenFantasia3DDisentanglingGeometry2023} separates geometry modeling and appearance into distinct components, using SDS to update both. 
HiFA \citep{zhuHiFAHighfidelityTextto3D2024} introduces a modified schedule and applies regularization to improve the SDS process. Finally, LatentNeRF \citep{metzerLatentNeRFShapeGuidedGeneration2022} utilizes SDS but parameterizes the object in the latent space of a stable diffusion autoencoder, rendering into latent dimensions rather than RGB values. While these methods rely on the SDS/SJC framework for mode-seeking, our work takes a wholly different approach, focusing on developing a more faithful sampling procedure to replace SDS/SJC for both 3D generation and broader differentiable function sampling.

\subsubsection{Pretrained and fine-tuning methods}
Unlike SDS and SJC---which are entirely zero-shot and can be performed with a frozen diffusion model---several other methods have been developed to achieve improved quality and diversity of the generated diffreps, albeit at the cost of additional fine-tuning. While some of these methods require additional data \citep{zhangTamingStableDiffusion2025}, ProlificDreamer (VSD) \citep{wang_prolificdreamer_2023} and DiffInstruct \citep{luo_diff-instruct_2023} are two examples in which diffusion models can be fine-tuned or distilled using only synthetically generated data.

VSD specifically addresses the problem of generating 3D assets from a 2D diffusion model using particle-based variational inference to follow the Wasserstein gradient flow, solving the KL objective from SDS in the $\mathbb W_2(\Theta)$ parameter distribution space. This approach produces high-quality, diverse samples even at lower guidance levels.

In our work, we restrict ourselves to the \emph{training-free setting}. This choice more readily enables application to new modalities and is independent of the score function estimator's architecture.

\subsection{Constrained sampling methods}
Our method requires generating multiple consistent images of a scene from different views. For instance, we generate various images of a 3D scene from different camera locations and orientations to determine NeRF parameters. These images must be consistent to ensure coherent updates to the diffrep parameters. While the diffrep inherently enforces this consistency, we can improve our method's convergence rate by encouraging the reverse process to satisfy consistency conditions using constrained sampling methods.

Several approaches enable constrained sampling from an unconditionally trained diffusion model. The naïve approach of projecting onto constraints \citep{song_score-based_2021} leads to samples lacking global coherence. In contrast, \citet{lugmayr_repaint_2022} propose RePaint---a simple method for inpainting that intermingles forward steps with reverse steps to harmonize generated noisy samples with constraint information.

For more general conditioning---including inverse problems---MCG \citep{chung_improving_2022} and its extension DPS \citep{chung_diffusion_2023} have gained popularity. These methods add a corrective term to the score, encouraging reverse steps to minimize constraint violations for the predicted final sample $\hat x_0$ according to the Tweedie formula. MCG and DPS have seen considerable application and extension to broader settings \citep{bansal2023universal}. \citet{finzi_user-defined_2023} showed this correction to be an asymptotically exact approximation of $\nabla \log p_t(\mathrm{constraint}|x)$, while \citet{rout_beyond_2023} derive the relevant term from Bayes' rule and higher-order corrections.

\section{Pulling back diffusion models to sample differentiable representations}

The score model associated with the noise predictor from \autoref{eq:score_model} implicitly defines a distribution on the data space $\mathcal X$, and the reverse time PF-ODE from \autoref{eq:ode_sampling} provides a means to sample from that distribution. 
In this section, we will show how to use the score model and the PF-ODE to sample the parameters of a differentiable representation (diffrep) so that the rendered views resemble samples from the implicit distribution.

SDS and SJC derive their parameter updates using the pullback of the sample score function through the render map $f$. This pullback is obtained using the chain rule: $\tfrac{d(\log p)}{d\theta} = \tfrac{d(\log p)}{dx}\tfrac{dx}{d\theta}$. The pullback score is then used \emph{as is} for gradient ascent in the parameter space. However, since $p$ is a distribution\mbox{---}not a simple scalar field---the change of variables formula requires an additional $\log \det J$ volume adjustment term for the correct pulled-back score function in the parameter space, where $J$ is the Jacobian of $f$. A careful examination of this approach through the lens of differential geometry reveals even deeper issues. 

In differential geometry, it is a vector field (a section of the tangent bundle $T\mathcal X$)---not a differential $1$\mbox{-}form (a section of the cotangent bundle $T^*\mathcal X$)---that defines the integral curves (flows) on a manifold. To derive the probability flow ODE in the parameter space, we must pull back the sample vector field $\frac{dx}{dt}\in T\mathcal X$ to the parameter vector field $\frac{d\theta}{dt} \in T\Theta$ using $f$. The pullback of the vector field through $f$ is given by $(f^* \frac{dx}{dt})|_\theta = (J^\top J)^{-1}J^\top \frac{dx}{dt}|_{f(\theta)}$, Thus, as we derive in \autoref{sec:diffgeopersp}, the pullback of the probability flow ODE is:
\begin{equation}\label{eq:pullback_ode}
    \frac{d\theta}{dt} = -\dot\sigma(t)\sigma(t)(J^\top J)^{-1}J^\top\nabla \log p_t(f(\theta)).
\end{equation}

\begin{figure}[t]
    \centering
    \begin{tikzpicture}

    \node (Theta) at (0, 0.5) {$\Theta$};
    \node (X) [right=2.5cm of Theta] {$\mathcal{X}$};
    \draw[->] (Theta) -- node[above] {$f$} (X);
    
    \node[teal] (TTheta) at (0, -1) {$T \Theta$};
    \node[teal] (TX) at (3, -1) {$T \mathcal{X}$};
    \node[violet] (TStarTheta) at (0, -2.5) {$T^* \Theta$};
    \node[violet] (TStarX) at (3, -2.5) {$T^* \mathcal{X}$};
    
    \draw[violet, ->] (TX) -- node[right] {$I$} node[left] {\circled{1}} (TStarX);
    \draw[violet, ->] (TStarX) -- node[below] {$J^\top$} node[above] {\circled{2}} (TStarTheta);
    \draw[violet, ->] (TStarTheta) -- node[left] {$(J^\top J)^{-1}$} node[right] {\circled{3}}(TTheta);
    \draw[teal, ->] (TX) -- node[below] {$(J^\top J)^{-1} J^\top$} (TTheta);
    
    \node[above=0.02cm of TTheta, teal] {$\frac{d\theta}{dt} = (J^\top J)^{-1} J^\top \frac{dx}{dt}$};
    \node[above=0.02cm of TX, teal] {$\frac{dx}{dt}$};
    \node[below=0.02cm of TStarTheta, violet] {$\nabla \log p(\theta) = J^\top \nabla \log p(x)$};
    \node[below=0.02cm of TStarX, violet] {$\nabla \log p(x)$};
    \end{tikzpicture}
    \begin{subfigure}[b]{0.18\textwidth}
        \includegraphics[width=\textwidth]{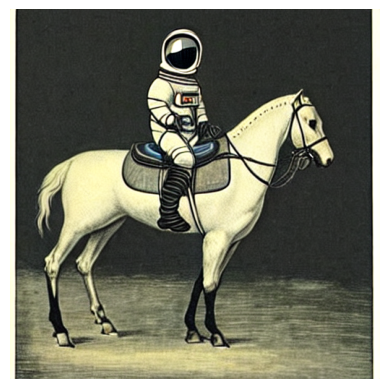}
        \subcaption{$(J^\top J)^{-1} J^\top \frac{dx}{dt}$}
        \label{fig:our_method}
    \end{subfigure}
    \hfill
    \begin{subfigure}[b]{0.18\textwidth}
        \includegraphics[width=\textwidth]{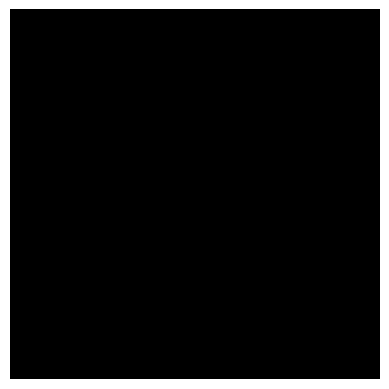}
        \subcaption{$J^\top\nabla \log p(x)$}
        \label{fig:sjc_method}
    \end{subfigure}
    \hfill
    \begin{subfigure}[b]{0.18\textwidth}
        \includegraphics[width=\textwidth]{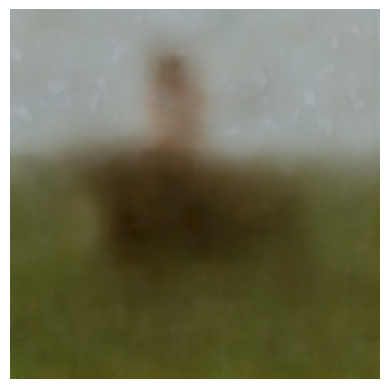}
        \subcaption{$\lambda J^\top\nabla \log p(x)$}
        \label{fig:scaled_sjc}
    \end{subfigure}
    \caption{
    \textbf{(Left)} Commutative diagram showing how the PF-ODE vector field gets pulled back through $f$, respecting the differential geometry. The process involves: \texttcircled{1} converting $\frac{dx}{dt}$ to the cotangent vector field $\nabla \log p(x)$ (up to scaling terms) with the Euclidean metric $I$, \texttcircled{2} pulling back $\nabla \log p(x)$ via the chain rule using the Jacobian $J$, and then \texttcircled{3} transforming the pulled back differential form score function into the corresponding vector field using the inverse of pulled back metric $(J^\top J)^{-1}$. When used in a PF-ODE, SJC and SDS take the bottom path with the chain rule, however they do not complete the path by neglecting the $T^*\Theta \rightarrow T\Theta$ transformation. 
    \textbf{(Right)} SIREN image renders generated using the PF-ODE schedule with the prompt ``An astronaut riding a horse'' using the: (\subref{fig:our_method})~complete pulled-back $\frac{dx}{dt}$ vector field, (\subref{fig:sjc_method}) pulled-back covector field from SJC (omitting step~\texttcircled{3}) $J^\top \nabla \log p(x)$, (\subref{fig:scaled_sjc}) Scaled pulled-back covector field from SJC $\lambda=0.0001$.}
    \label{fig:diffgeo_comm}
\end{figure}

We note that this is in contrast to the score chaining from \citet{poole_dreamfusion_2022,wang_score_2022}. The confusion stems from comparing the different types of pulled-back elements. While $\frac{dx}{dt}$ in \autoref{eq:ode_sampling} is a vector field, the score function $\nabla \log p_t(x(t))$ is a covector field---a differential form. Pulling back the score function as a differential form correctly yields $J^\top\nabla \log p_t(f(\theta))$---the term used in SJC. The issue lies with the hidden (inverse) Euclidean metric within \autoref{eq:ode_sampling}, which converts the differential-form score function into the corresponding vector field. 

In canonical coordinates, the Euclidean metric is the identity. Consequently, the components of the score function remain unchanged when transformed into a vector field. Therefore we can safely ignore the metric term in the PF-ODE formulation of \autoref{eq:ode_sampling} for the sample space $\mathcal X$. However, this is not true for the diffrep parameter space $\Theta$.

To convert the pulled-back score function into the corresponding pulled-back vector field, we must use the pulled-back inverse Euclidean metric given by $(J^\top J)^{-1}$. This yields the pulled-back form of the PF-ODE in \autoref{eq:pullback_ode}\footnote{One may be tempted to impose that the metric is Euclidean both in the $\mathcal X$ space and the $\Theta$ space, but these two choices are incompatible because the render transformation maps between the two spaces.}.  \autoref{fig:diffgeo_comm} illustrates this procedure via a commutative diagram and provides an example of what goes wrong if you use the incorrect pulled-back term, as suggested by SDS and SJC. In the SDS and SJC approaches, the $(J^\top J)^{-1}$ term behaves like a PSD preconditioner for gradient ascent. It may accelerate the convergence rate to the mode but does not fundamentally change the solution. However, this term is critical for the pulled-back PF-ODE since it impacts the entire trajectory and the ultimate sample. For an explanation of why the $\log \det J$ term is absent in the mathematically correct update, see \autoref{sec:diffgeopersp}.



A more intuitive way to understand why the correct update is given by \autoref{eq:pullback_ode} rather than SDS and SJC's version is to consider the case where the input and output dimensions are equal. From the chain rule, we have $\frac{d\theta}{dt} = \frac{d\theta}{dx} \frac{dx}{dt}$.
Using the inverse function theorem, we compute $\frac{d\theta}{dx} = \left(\frac{dx}{d\theta}\right)^{-1} = J^{-1}$, where $J$ is $f$'s Jacobian. This leads to $\frac{d\theta}{dt}=J^{-1}\frac{dx}{dt}$. For invertible $f$, this equation and \autoref{eq:pullback_ode} are equivalent. For non-invertible $f$, we can interpret the pullback as the solution to the least-squares minimization problem $\arg\min_{\frac{d\theta}{dt}}\left\|J\frac{d\theta}{dt} -\frac{dx}{dt}\right\|^2$.

\subsection{Efficient implementation}

\paragraph{Separated noise}
Following the pulled-back PF-ODE in \autoref{eq:pullback_ode}, we can find $\theta_t$ such that $f(\theta_t)$ represents a sample $x(t)\sim p_{0t}(x(t)|x(0)) = \mathcal N(x(t);x(0),\sigma^2(t)I)$. However, this approach has a limitation: $x(t)$ is noisy, and most diffrep architectures are optimized for typical, noise-free images. Consequently, $J$ is likely ill-conditioned, leading to longer convergence times.

To address this issue, we factor $x(t)$ into a noiseless signal and noise using the reparameterization of the perturbation kernel: $x(t) = \wh x_0(t) + \sigma(t)\epsilon(t)$. By letting $\epsilon(t) = \epsilon$ remain constant throughout the sampling trajectory and starting with $\wh x_0(T) = 0$, we can update $\wh x_0$ with $\frac{d\wh x_0}{dt} = \frac{dx}{dt} - \dot\sigma(t)\epsilon$. This decomposition allows $f(\theta_t)$ to represent the noiseless $\wh x_0(t)$ instead of $x(t)$, substantially improving the conditioning of $J$. \autoref{fig:ddrep_alg} illustrates how this separation works in practice.

\begin{figure}[t]
    \centering
    \includegraphics[width=0.75\linewidth]{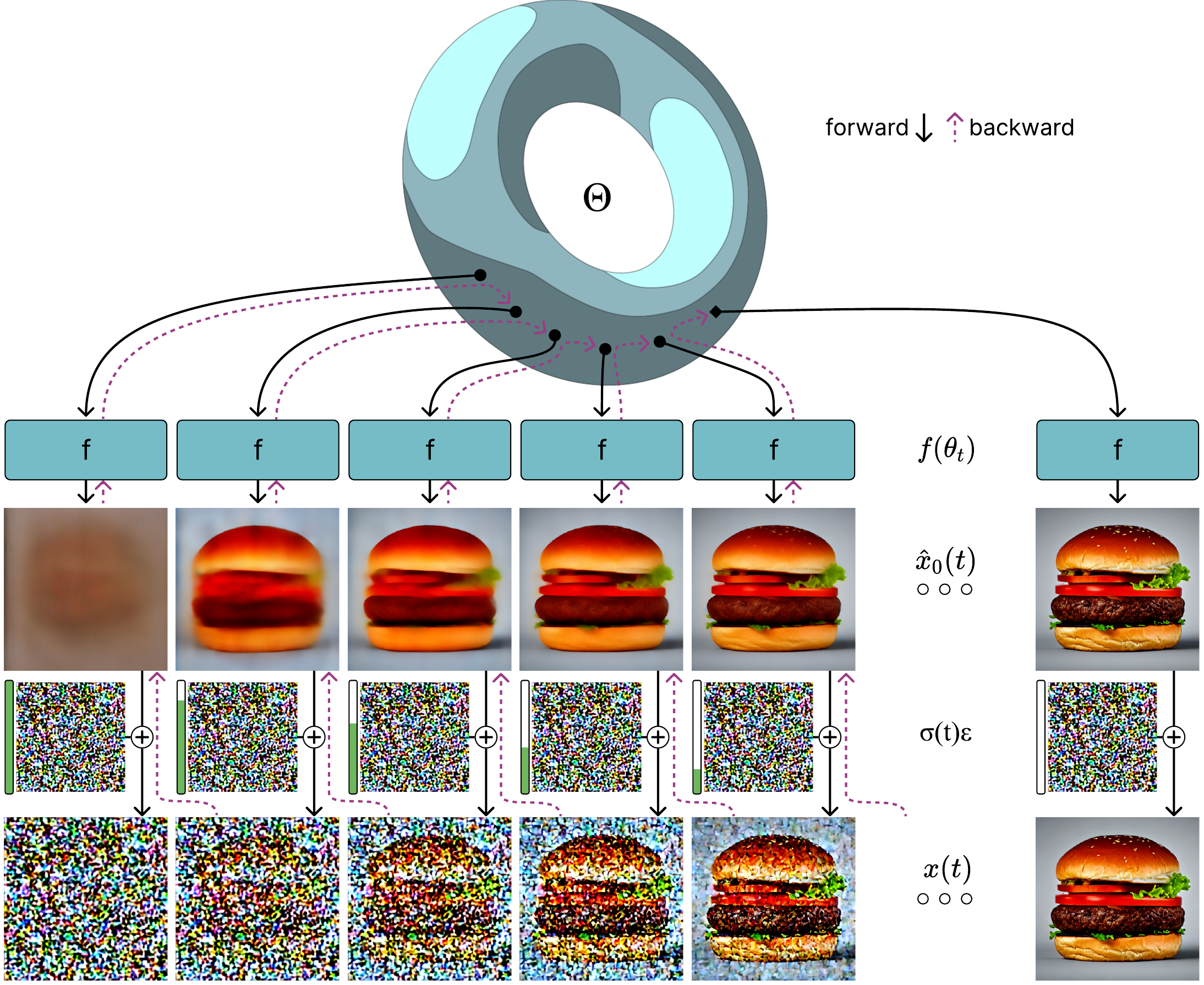}
    \caption{The parameters of the diffrep $\theta_t \in \Theta$ (torus) are used to render the noiseless signal $f(\theta_t) = \wh x_0(t)$, which are then combined with the noise $\sigma(t)\epsilon$ to generate the noisy sample $x(t)$. We can pull back each step of the reverse diffusion process to update the parameters $\theta_t + \Delta\theta_t$.}
    \label{fig:ddrep_alg}
\end{figure}

\paragraph{Efficient optimization}
The Jacobian $J$ in \autoref{eq:pullback_ode} represents the derivative of the image with respect to the diffrep parameters. For all but the smallest examples, explicitly forming $J^\top J$ is \emph{computationally intractable}. While iterative linear solvers together with modern automatic differentiation frameworks (e.g., \citep{potapczynski2024cola}) could solve \autoref{eq:pullback_ode}, we found it faster to compute the parameter update as the solution to a non-linear optimization problem. This approach avoids the costly JVP (Jacobian vector product) required to multiply with $J^\top J$.

For each step with size $\Delta t$, we solve:
\begin{equation}\label{eq:minimize}
    \Delta \theta = \argmin_{\delta \theta} \left\|f(\theta)-f(\theta-\delta \theta)-\dot\sigma(t) \Delta t(\hat \epsilon(f(\theta) + \sigma(t)\epsilon, t) -\epsilon)\right\|^2.
\end{equation}
As $\Delta t \to 0$, using the linearization $f(\theta - \Delta \theta) = f(\theta) - J \Delta \theta + \mathcal O(\|\Delta\theta\|^2)$, this optimization objective approaches the linear least-squares solution. The optimal $\tfrac{\Delta \theta}{\Delta t}$ converges to the exact $\tfrac{d\theta}{dt}$ from \autoref{eq:pullback_ode}. In practice, we employ the Adam optimizer to solve \autoref{eq:minimize} for whatever discrete $\Delta t$ is used in the diffusion sampling process. This procedure avoids the explicit formation of the linear least-squares solution.

\subsection{Coupled images, stochastic functions, and 3D rendering}\label{sec:stoch_views}

Thus far, we have only considered the case where the output of the render function is a \emph{single image} $x\in \mathcal X$. However, our interest lies in render functions $f$ that map parameters $\theta\in\Theta$ to an entire view-dependent image space $\mathcal H \subseteq \mathcal X^\Pi$. Using the inner product defined in \autoref{eq:inner_prod}, we derive the view-dependent pullback PF-ODE:
\begin{equation}
    \frac{d\theta}{dt} = f^*\frac{d\wh x_0}{dt} = f^* \bigg(\frac{dx}{dt}-\dot \sigma(t)\epsilon\bigg)=-\dot{\sigma}(t)\sigma(t)\mathbb{E}\big[J^\top_\pi J_\pi\big]^{-1}\mathbb{E}\left[J^\top_\pi \left(\nabla \log p_t(x(\pi),\pi)-\tfrac{\epsilon(\pi)}{\sigma(t)}\right)\right],
\end{equation}
where the expectations are taken over the views $\pi \sim \Pi$. Here, $\nabla \log p_t(x(\pi),\pi)$ represents the view-specific score function. We use the original score function $\nabla \log p_t(x(\pi))$, with additional view information in the prompt. $\epsilon(\pi)$ denotes the separately managed noise rendered from view $\pi$.

Examining the optimization form of this equation provides further insight. We can interpret it as minimizing the norm in the function space:
\begin{equation}
\Delta \theta = \argmin_{\Delta \theta} \|f(\theta, \cdot)-f(\theta-\Delta\theta, \cdot)-\dot\sigma\Delta t(\hat \epsilon_t(\cdot) -\epsilon(\cdot))\|^2_{\mathcal H}.
\end{equation}
This is directly analogous to \autoref{eq:minimize}, but with a different inner product. Explicitly written:
\begin{equation}\label{eq:minimize_stochastic}
    \Delta \theta = \argmin_{\Delta \theta} \mathbb{E}_{\pi \sim \Pi}\bigg[\left\|f(\theta, \pi)-f(\theta-\Delta\theta, \pi)-\dot\sigma\Delta t(\hat \epsilon_t(\pi) -\epsilon(\pi))\right\|^2\bigg],
\end{equation}
which we can empirically minimize using view samples $\pi \sim \Pi$.

\subsection{Consistency and implicit constraints}\label{sec:consistency}

Our method successfully ``pulls back'' the PF-ODE $\frac{dx}{dt}$ using the pull-back of the score function $\nabla \log p_t(x(t))$. However, our true goal is to pull back $\nabla \log p_t(x(t)|\wh x_0(s)\in\mathrm{range}(f))$ for all $s \le t$. This ensures that $f$ can render the noiseless components $\wh x_0$ throughout the remaining reverse process. For example, when pulling back the PF-ODE for different views of a 3D scene, we want to ensure consistency across all the views.

When $f$ is sufficiently expressive and invertible, $\nabla \log p_t(x(t)| \wh x_0(s)\in\mathrm{range}(f)) \approx \nabla \log p_t(x(t))$. This would allow us first to sample $x(0)$ using the PF-ODE from \autoref{eq:ode_sampling} and then invert $f$ to find $\theta(0)$. However, for most significant applications of our method, $f$ is not invertible.

In noninvertible cases, consider the sampling trajectory $x(t)$ in $\mathcal{X}$ when $\wh x_0(t) \notin \text{range }f$, particularly when no nearby sample has high probability in $\text{range }f$. Each step of \autoref{eq:pullback_ode} follows the direction in $\Theta$ that best approximates the direction of the score model in a least-squares sense. When $f$ is not invertible, $J$ is not full rank, and the update to $\theta$ will not precisely match the trajectory in $x$. Consequently, $\|f(\theta_t-\Delta\theta) - x(t-\Delta t)\|$ will be significant. The score model, unaware of this, will continue to point towards high-probability regions outside the range of $f$.

For example, consider an $f$ that only allows low-frequency Fourier modes in generated samples. The unconstrained score model might favor images with high-frequency content (e.g., hair, explosions, detailed textures), resulting in blurry, unresolved images. However, suppose the score model was aware of this constraint against high-frequency details. In that case, it might guide samples toward more suitable content, such as landscapes, slow waves, or impressionist art---dominated by expressible low-frequency components. This issue becomes more pronounced when sampling multiple coupled images through $f$, such as with panoramas and NeRFs, where various views must be consistent.

To address this challenge, we adapt the RePaint method \citep{lugmayr_repaint_2022}, initially designed for inpainting, to guide our PF-ODE towards more ``renderable'' samples. RePaint utilizes the complete Langevin SDE diffusion process, interspersing forward and reverse steps in the sampling schedule. This approach allows the process to correct for inconsistencies during sampling.  The forward steps harmonize samples at the current time step by carrying information from earlier steps that implicitly encode the constraint.

As RePaint requires a stochastic process for conditioning, we employ the DDIM \citep{song_denoising_2022} sampling procedure with controllable noise at each step. We derive the forward updates of DDIM in \autoref{sec:ddim_fwd}. The pseudocode for the DDRep algorithm using DDIM sampling with RePaint is presented in \autoref{alg:ddrep}.

\subsection{Summary of key ideas}
Our method introduces several crucial innovations for sampling differentiable representations using pretrained diffusion models:
\begin{enumerate}
\item \textbf{True sampling vs. mode-seeking:} Unlike SJC and SDS, which primarily find modes of the distribution, our method aims to generate true samples. This distinction is critical for capturing the full diversity and characteristics of the underlying distribution, especially in high-dimensional spaces.
\item \textbf{Correct pullback of the PF-ODE:} We derive the mathematically correct form of the pulled-back Probability Flow ODE, which includes a crucial $(J^\top J)^{-1}$ term. While this term acts as a preconditioner in optimization-based approaches (affecting only convergence rates), it is essential for unbiased sampling in our PF-ODE framework. Omitting this term leads to incorrect results.
\item \textbf{Efficient implementation:} We introduce techniques for efficient implementation, including separated noise handling and a faster suboptimization procedure. These innovations allow for the practical application of our method to sample complex differentiable representations.
\item \textbf{Generalization to coupled images and stochastic functions:} Our method extends naturally to scenarios involving coupled images and stochastic functions, making it applicable to a wide range of problems, including 3D rendering and panorama generation.
\item \textbf{Handling multimodality and constraints:} Our sampling approach naturally handles multimodal distributions, unlike mode-seeking methods. However, this introduces challenges when the samples are incompatible with the constraints imposed by the differentiable representation. We address this by conditioning the sampling process with the consistency constraint using an adapted RePaint method.

\end{enumerate}

These key ideas collectively enable our method to generate high-quality, diverse samples of differentiable representations by encouraging the diffusion model to maintain consistency with the constraints imposed by the representation.

The pseudocode for the DDRep algorithm using DDIM sampling with RePaint is presented in \autoref{alg:ddrep}.

\section{Experiments}\label{sec:exps}

\begin{figure}[t]
    \centering
        \centering
    \begin{minipage}{0.49\textwidth}
        \includegraphics[width=\textwidth]{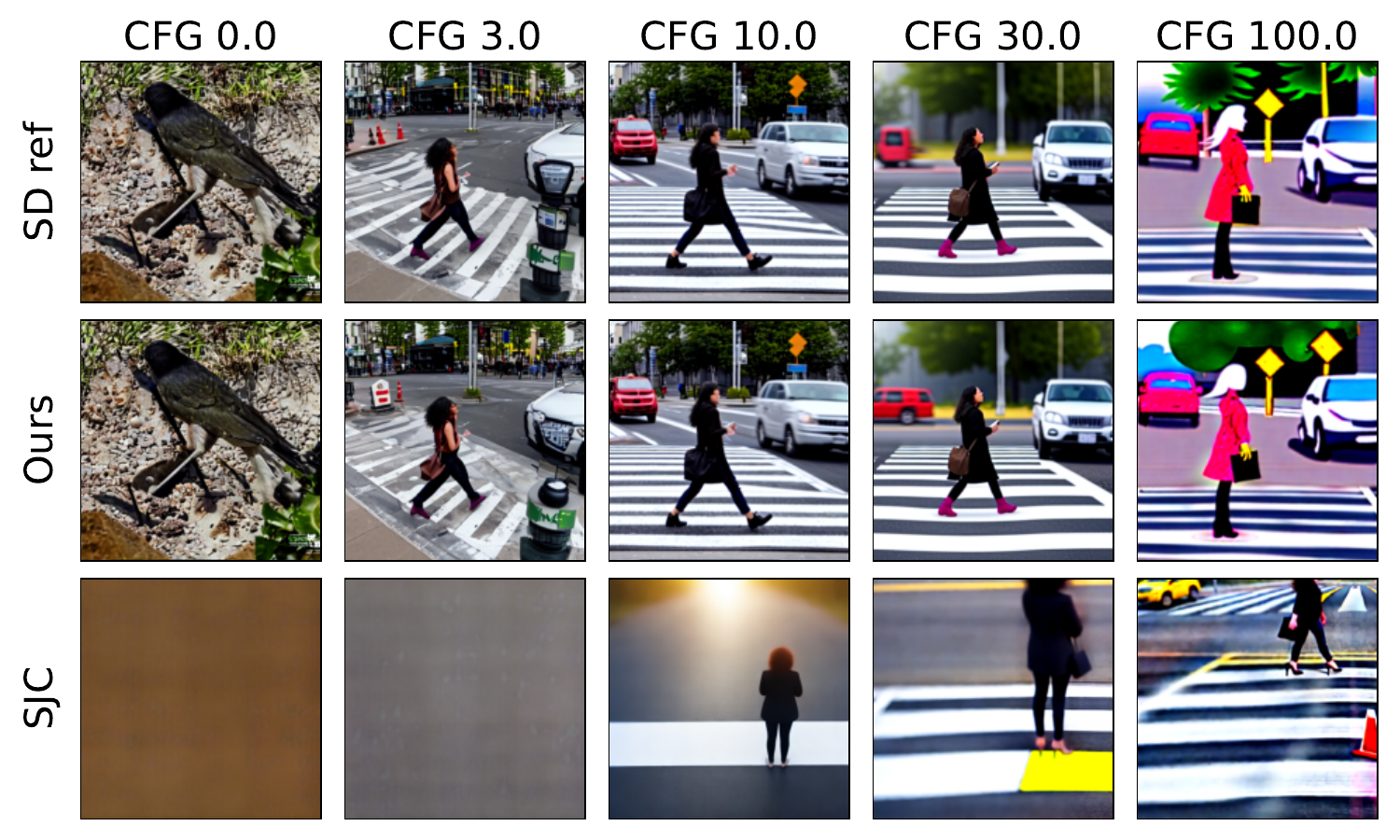}
    \end{minipage}\hfill
    \begin{minipage}{0.5\textwidth}
        \includegraphics[width=\textwidth]{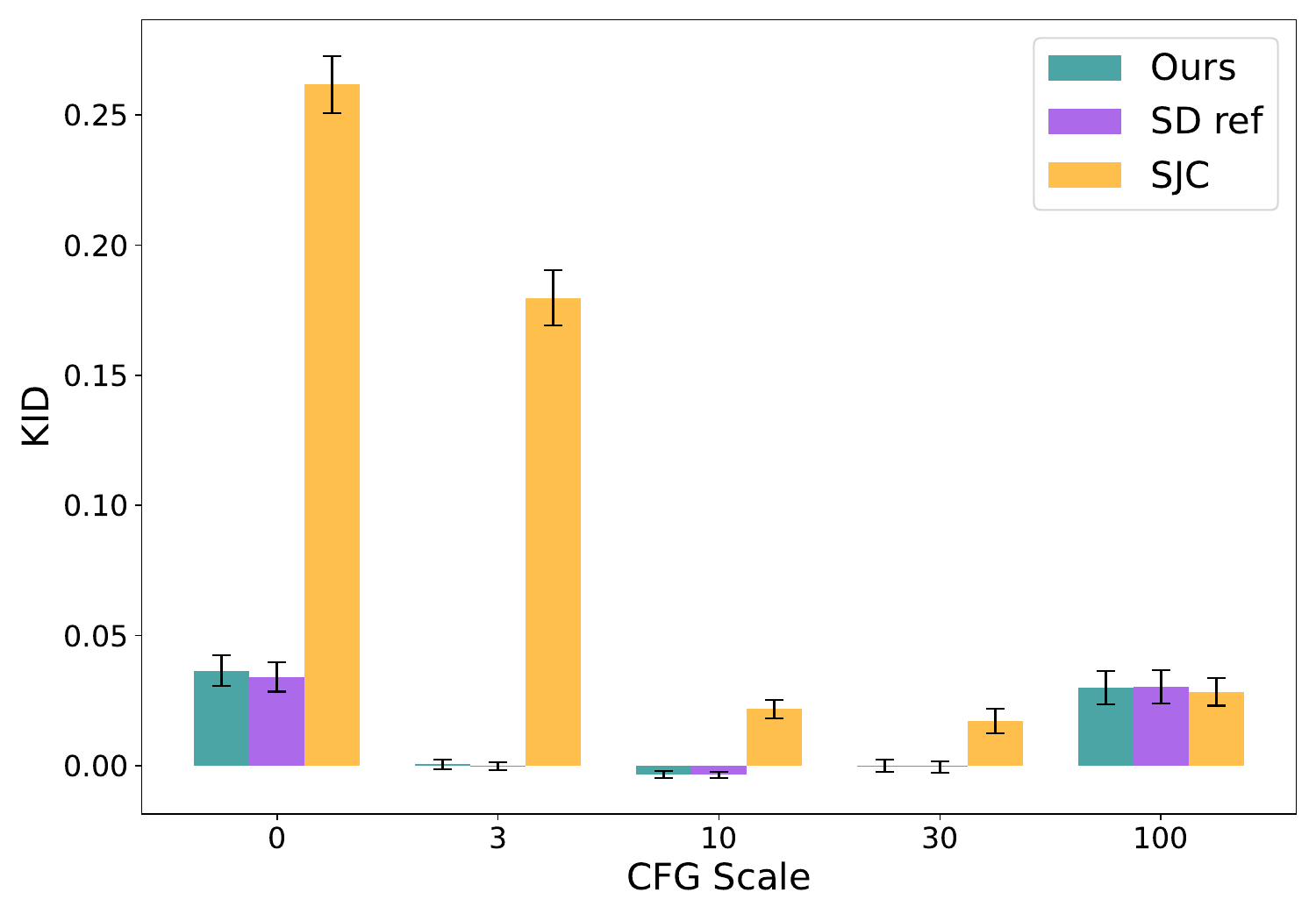}
    \end{minipage}
    \caption{The left figure contains sample renders using the prompt ``A woman is standing at a crosswalk at a traffic intersection.'' from the reference SD (top), our method (middle), and SJC (bottom) over the CFG scales [0,3,10,30,100] from left to right. The right plot is the KID metric (closer to 0 is better) measured on the SIRENs sampled from our method, the SD reference samples, and the SIRENs sampled using SJC. }
    \label{fig:kid-res}
\end{figure}

\begin{figure}[t]
    \centering
    \includegraphics[width=\textwidth]{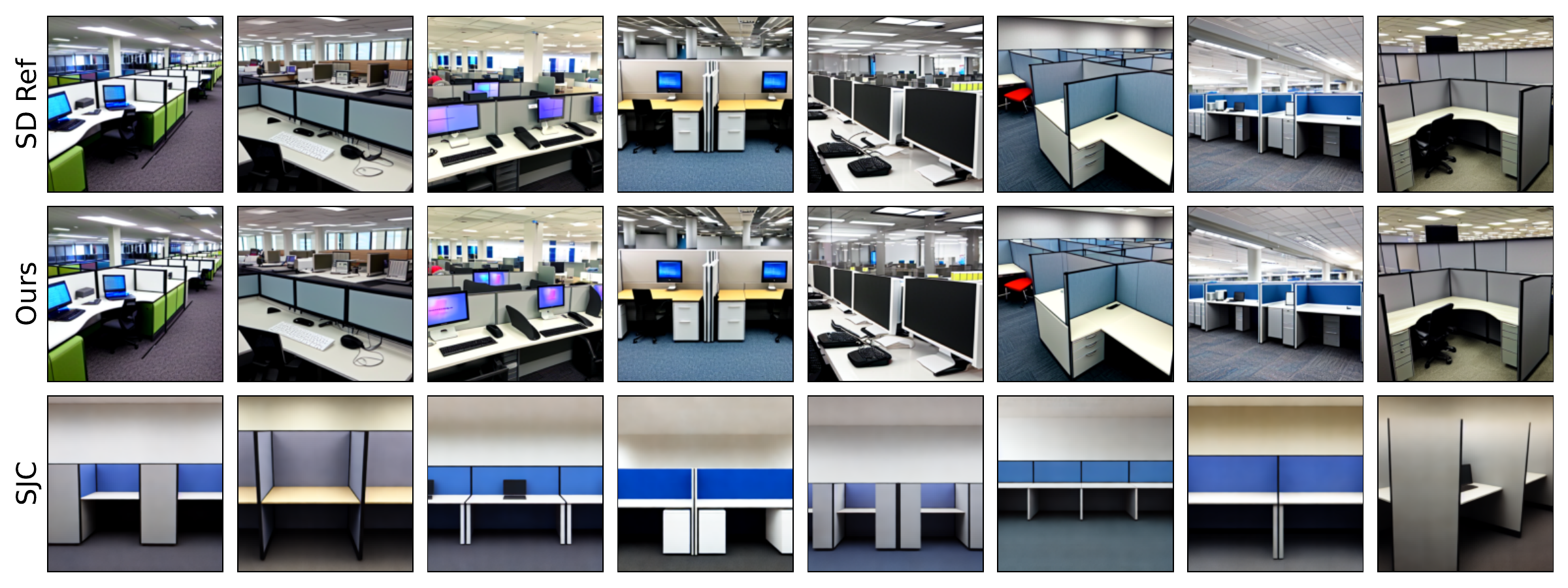}
    \caption{Samples generated using SD ref (top), Our method (middle), and SJC (bottom) using the same prompt ``An office cubicle with four different types of computers'' with eight different seeds.}
    \label{fig:div-cmp}
\end{figure}

All our experiments used the Hugging Face implementation of Stable Diffusion 1.5 (SDv1.5) from \citet{rombach_high-resolution_2022} with the default hyperparameters as the noise predictor model. We conducted our experiments on a single NVIDIA A6000 GPU. We used the complete Langevin SDE given by the DDIM \citep{song_denoising_2022} procedure for all our experiments, with $\eta=0.75$ as the stochastic interpolation hyperparameter. We interspersed forward and reverse steps to harmonize the diffrep constraints in the sampling procedure. For the suboptimization problem described in \autoref{eq:minimize}, we used the Adam optimizer for 200 steps.

\subsection{Image SIRENs}\label{subsec:siren}
We compare the results of our method with those of SJC \citep{wang_score_2022} when used to produce SIRENs \citep{vincent_connection_2011} for images. While we could directly fit the SIREN to the final image $x(0)$\mbox{---}obtained by first solving the PF-ODE in the image space---we use this scenario to compare different \emph{training-free} generation methods quantitatively.

The SIREN we used was a 3-layer MLP mapping the 2D $(x,y) \in [0,1]^2$ pixel coordinates to the $\R^4$ latent space of SDv1.5. The MLP had a 128-dimensional sinusoidal embedding layer, a depth of 3, a width of 256, and used $\sin(\cdot)$ as the activation function. To render an image, we used a $64\times 64$ grid of equally spaced pixel coordinates from $[0,0]$ to $[1,1]$ with a shape of $(64, 64, 2)$. The MLP output was a latent image with shape $(64,64,4)$, which we decoded using the SDv1.5 VQ-VAE into an RGB image with shape $(512,512,3)$. We calculated the final metrics on these rendered images.

We used the first 100 captions (ordered by id) from the MS-COCO 2017 validation dataset \citep{linMicrosoftCOCOCommon2015} as prompts. We generated three sets of images:
\begin{enumerate}
\item Reference images using SDv1.5,
\item Images rendered from SIRENs sampled using our method, and 
\item Images rendered from SIRENs generated using SJC \citep{wang_score_2022}.
\end{enumerate}
We compared these sets against 100 baseline images generated using SDv1.5 with the same prompts but a different seed. We repeated this experiment at five CFG scales \citep{hoClassifierFreeDiffusionGuidance2022} (0, 3, 10, 30, 100). We used the KID metric from \citet{binkowskiDemystifyingMMDGANs2021} to compare the generated images, as it converges within 100 image samples while maintaining good comparison ability. For runtime metrics, see \autoref{sec:exp_details}.

The example images in \autoref{fig:kid-res} (left) demonstrate that the samples generated by our method are almost indistinguishable from the reference. The results in \autoref{fig:kid-res} (right) clearly show that our method produces images on par with SDv1.5 reference samples. 

To further support this observation, we measured the PSNR, SSIM, and LPIPS scores of our method's samples against the SDv1.5 reference images (\autoref{tab:imsim}). The results show that our method produces SIREN images nearly identical to the reference set.

To demonstrate that our method achieves sampling instead of mode-seeking, we plot eight samples using the same prompt for all models in \autoref{fig:div-cmp} with different seeds at CFG scale 10. The samples generated by SJC all look very similar, while those generated by our method and SD show significantly more diversity.

\subsection{SIREN Panoramas}
\begin{figure}[!t]
    \centering
    \includegraphics[width=\textwidth]{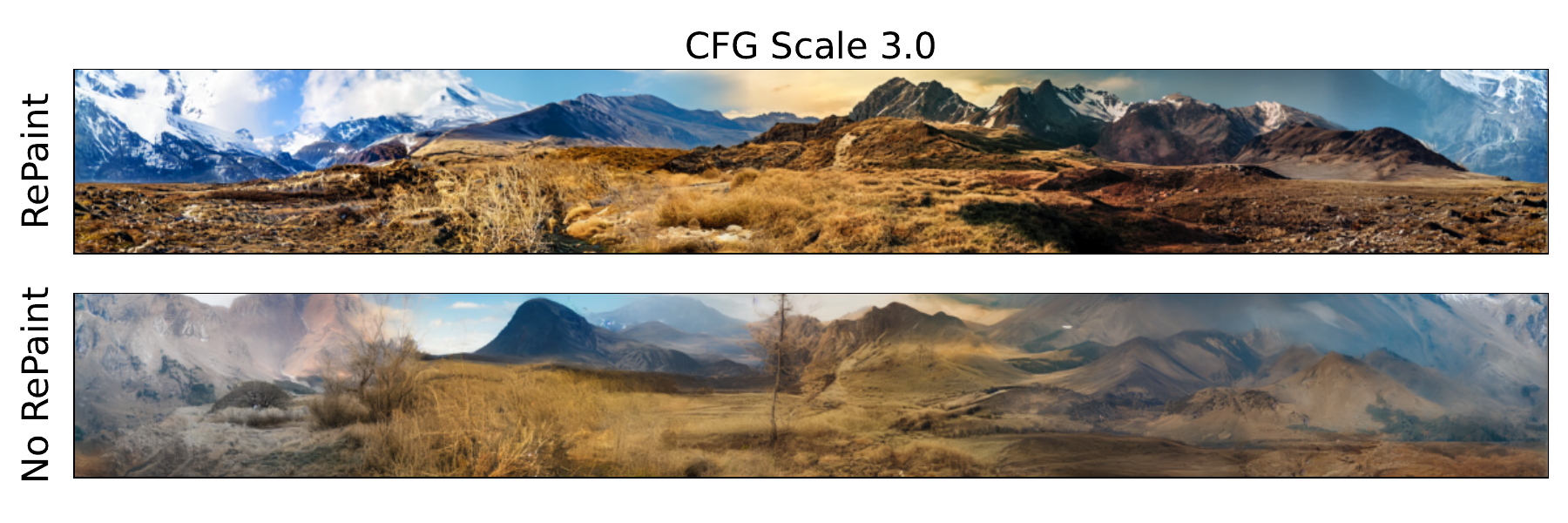}
    \includegraphics[width=\textwidth]{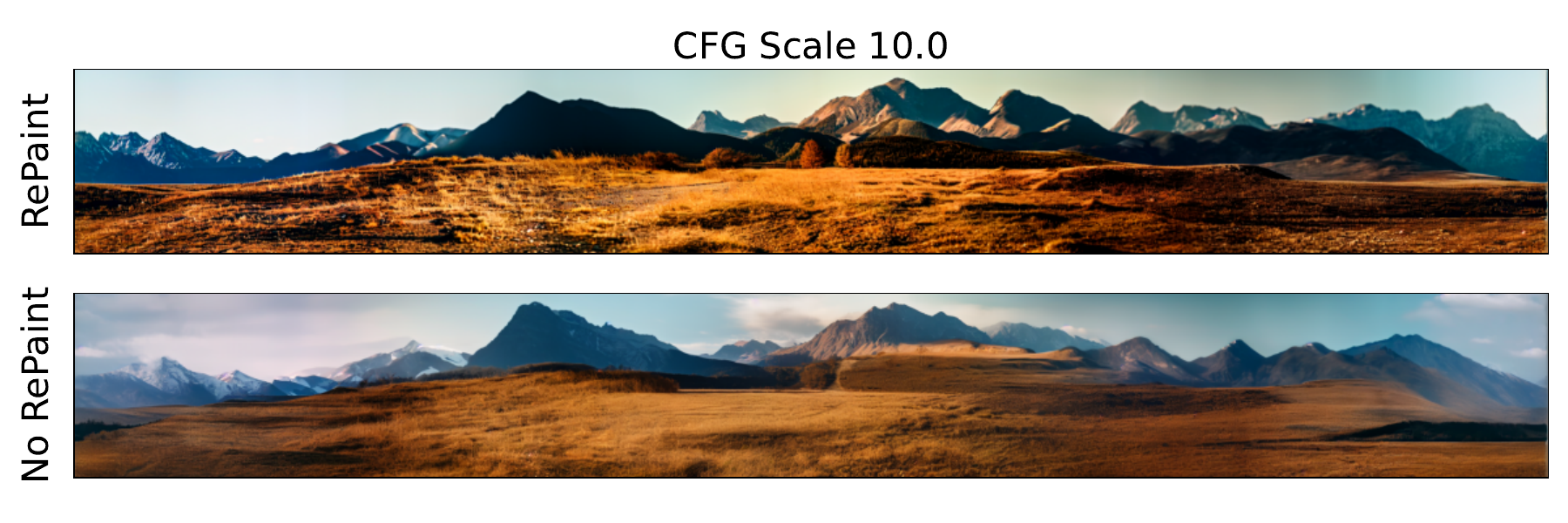}
    \caption{Comparison of landscape panoramas sampled using our method. In each pair, the top panorama is sampled using the RePaint method, while the bottom is sampled without RePaint. Both approaches use 460 function evaluations (NFEs) to ensure fairness. The top pair uses CFG scale 3.0, and the bottom pair uses CFG scale 10.0. The prompt for these panoramas was ``Landscape picture of a mountain range in the background with an empty plain in the foreground 50mm f/1.8''.}
    \label{fig:landscape_pano}
\end{figure}

In this section, we demonstrate how our method can generate SIREN panoramas using the same architecture as the SIREN from \autoref{subsec:siren} but with a modified render function. We begin by sampling a view grid of pixel locations with the shape $(8,64,64,2)$ where the batch size is $8$, the height and width are $64$, and the last $2$ dimensions are the $(x,y)$ coordinates.
The $y$ values are equally spaced in the range [0,1], and $x$ values are equally spaced in the range $[r, r+1/\texttt{ar}]\mod 1.0$ for $r \sim U(0,1)$. The $\cdot\mod 1.0$ constraint ensures horizontal wrapping, making the panorama continuous over the entire $360^\circ$ azimuth.

We render the view by sampling from the SIREN at the discrete pixel locations given by the view grid, then decode the latent output using the SDv1.5 VQ-VAE. For our experiments, we assumed that the images were taken by a camera lens with a $45^\circ$ field of view (equivalent to a 50mm focal length), corresponding to an aspect ratio of $\texttt{ar}=8$.

The SIREN panorama imposes implicit constraints to ensure consistent renders across overlapping views. \autoref{fig:landscape_pano} illustrates the impact of the consistency conditioning described in \autoref{sec:consistency}. To ensure fair comparisons, both RePaint and non-RePaint methods use the same number of function evaluations (460 NFEs).

Using RePaint substantially improves the quality and consistency of the generated panorama. This improvement is even more pronounced at low CFG scales, where the diffusion model is encouraged to be more creative and is typically less likely to produce consistent images.

For runtime metrics and additional panoramas generated using our approach, see \autoref{sec:exp_details}.

\subsection{3D NeRFs}

\begin{figure}[t]
    \includegraphics[width=\textwidth]{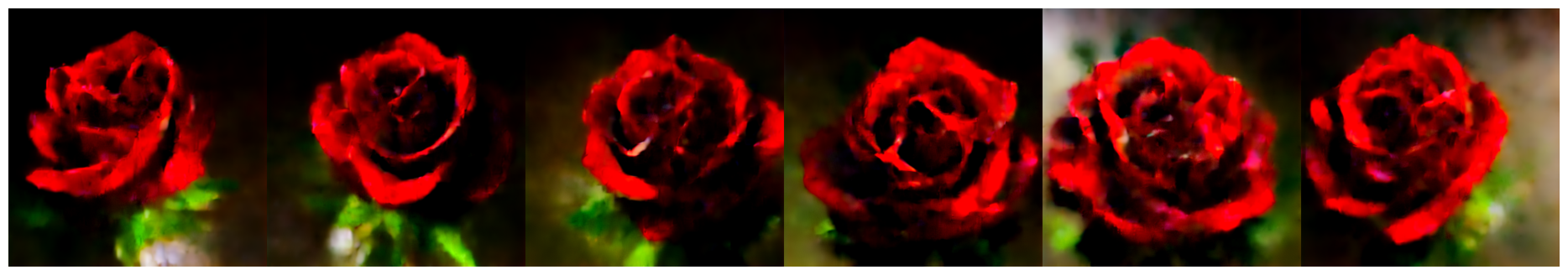}
    \includegraphics[width=\textwidth]{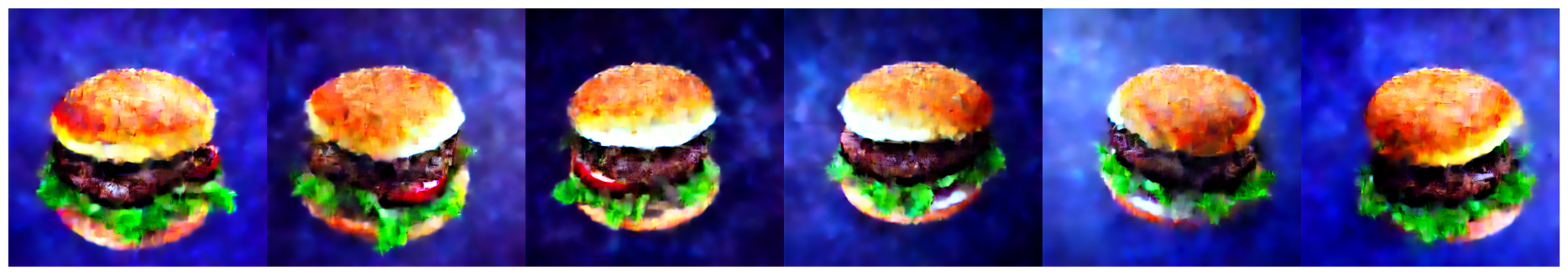}
    \includegraphics[width=\textwidth]{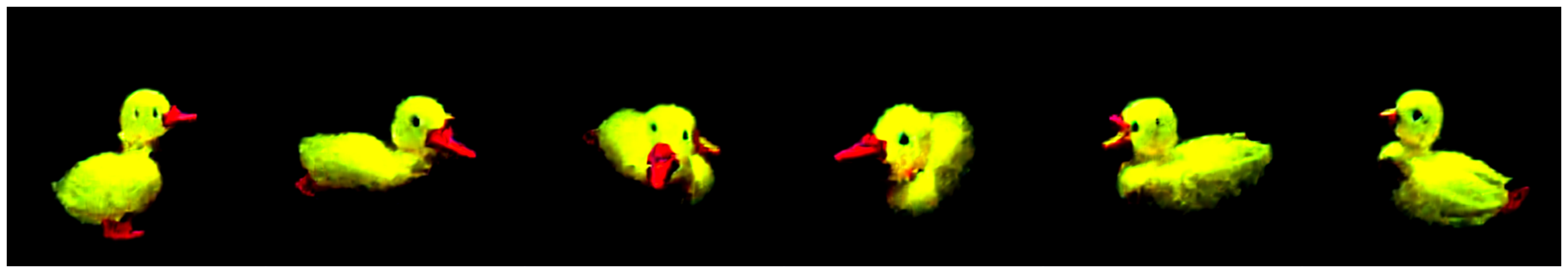}
    \caption{NeRFs generated using our method with the prompts (top) ``a photo of a delicious hamburger, centered, isolated, 4K.'', (middle) ``a DSLR photo of a rose'', and (bottom) ``a DSLR photo of a yellow duck".}
    \label{fig:nerfs}
\end{figure}

For our NeRF experiments, we employed the voxNeRF architecture used by SJC \citep{wang_score_2022}, chosen for its speed and efficiency. While we did not utilize any additional loss terms from their method, we found that incorporating the object-centric scene initialization from \citet{wang_prolificdreamer_2023} significantly improved our results. This initialization bootstrapped the diffusion process and led to faster convergence. Specifically, we used the initialization function $\sigma_{\text{init}}(x) = 10\left(1-\tfrac{\|x\|}{0.5}\right)$, as detailed in \citet{wang_prolificdreamer_2023}. For more experimental details, see \autoref{sec:exp_details}.

The NeRFs generated using our method, as illustrated in \autoref{fig:nerfs}, demonstrate our approach's capability to produce detailed, high-quality 3D representations from text prompts. These results showcase the effectiveness of our method in generating complex 3D structures while maintaining consistency across multiple views.

\section{Future work and limitations}
Our work has demonstrated the zero-shot generation of implicit images, panoramas, and NeRFs. The versatility of our approach opens up a wide range of potential applications, including vector graphics generation, embedding diverse content in scenes through geometric transformations, differentiable heightmaps, and applications using differentiable physics renderers.

While we focused on NeRFs in this paper, more advanced differentiable representations like Gaussian splats and full-scene representations have emerged. Our method should be applicable to these cases, but further work is needed to adapt our approach to these newer representations.

Looking ahead, one could envision generating entire maps or game environments using diffusion models. However, this would likely require innovations in amortization and strategies for splitting and combining subproblems. In these cases, the function $f$ may be even more restrictive than for SIRENs and NeRFs, underscoring the importance of conditionally sampling the implicit constraint.

\paragraph{Limitations} Our method faces two significant limitations. First, the additional steps required for RePaint introduce computational overhead. A priori, we cannot determine how many steps are needed to harmonize the constraint into the diffrep. Transitioning to a conditional sampling method like MCG could potentially allow us to skip these extra forward steps and directly integrate the constraint into the solver. However, this requires further investigation to ensure compatibility with our method.

Second, the substantial stochasticity in the Monte Carlo estimate of the pullback score over several views, particularly when the Jacobian of the render function is ill-conditioned, poses a challenge. While increasing the number of sample views could reduce variance in our estimate, it would also slow down our algorithm and consume more memory. Further research into view sampling techniques, such as importance sampling, could help decrease variance without compromising computational efficiency.

\section{Conclusion}
We have presented a comprehensive, training-free approach for pulling back the sampling process of a diffusion model through a generic differentiable function. By formalizing the problem, we have addressed two critical issues with prior approaches: enabling true sampling rather than just mode seeking (crucial at low guidance levels), and addressing a latent consistency constraint using RePaint to improve generation quality.

Our method opens up new possibilities for generating complex and constrained outputs using pretrained diffusion models. We hope that future research will build on these insights to further expand the capabilities and applications of diffusion models in areas such as 3D content generation, game design, and beyond.

\bibliography{references}
\bibliographystyle{unsrtnat}

\clearpage


\appendix
\section{PF-ODE Pullback Derivation and Geometric Details}\label{sec:diffgeopersp}

This section provides the details for the derivation of \autoref{eq:pullback_ode} using differential geometry.

First, we provide a full expansion of \autoref{eq:ode_sampling}.
For any point in our sample space $x\in\mathcal X$, we can define a Riemannian metric $M_x$ that describes the local geometry of the space around $x$. Given tangent vectors $y, z$ at point $x$, we define the inner product using this metric $\langle y, z \rangle_{M_x} = y^TM_xz$ where $M_x$ is represented using a positive-definite matrix. We implicitly use the Euclidean metric for the image space $\mathcal X$, which corresponds to the identity matrix $M_x = I_d, \forall x\in\mathcal X$. However, in curved spaces such as the parameter space $\Theta$, the corresponding Riemannian metric is not necessarily the identity and can vary based on where it is evaluated. 

Here, we use matrix notation rather than Einstein summation or coordinate-free notation for familiarity within the machine learning context. However, we note that matrix notation can obscure some important aspects of differential geometry. For example, while the components of the Euclidean metric in the Euclidean coordinate basis form the identity matrix, the metric is not the identity function, as it transforms between two distinct vector spaces. As such, when using the matrix notation, it becomes very important to keep track of the types of objects rather than just their values.


Because we use the Euclidean metric for the image space, its role in \autoref{eq:ode_sampling} is hidden. Explicitly incorporating the metric into the process, we get the following expanded metric PF-ODE
\begin{equation}\label{eq:metric_ode}
    \frac{dx}{dt} = -\dot\sigma(t)\sigma(t) M_x^{-1} \nabla\log {dP_t}/{d\lambda_x},
\end{equation}
where $P_t$ is the probability measure at time $t$ and $\lambda_x$ is the Lebesgue measure on $\mathcal{X}$, and ${dP_t}/{d\lambda_x}$ is the Radon-Nikodym derivative. Notice that when we use $M_x=I_d, \forall x\in\mathcal X$, this equation is equivalent to \autoref{eq:ode_sampling}. If we pull back the dynamics of this process through the render function $f:\mathcal X \to \Theta$ to the parameter space $\Theta$, we must pull back \emph{both} the score function and the metric $M_x$ to get the pulled back the vector field $f^*\tfrac{dx}{dt}$. The different object types transform as follows with the pullback $f^*$:
\begin{align}
    \text{Scalar Functions} &&s\in C(\mathcal{X}): \quad (f^*s)(\theta)&= s(f(\theta))\\
    \text{Cotangent Vectors} &&\omega\in T^*\mathcal{X}: \quad (f^*\omega)(\theta)&= J^\top \omega(f(\theta))\\
    \text{Metric Tensor} &&M \in T^*\mathcal{X} \otimes T^*\mathcal{X}: \quad (f^*M)(\theta)&=J^\top M_{f(\theta)} J \label{eq:metric}\\
    \text{Tangent vectors} &&v\in T\mathcal{X}: \quad (f^*v)(\theta)&=(J^\top M_{f(\theta)} J)^{-1} J^\top M_{f(\theta)}v(f(\theta)) \label{eq:vf}, 
\end{align}
where $J$ is the Jacobian of $f$ evaluated at $\theta$.
For a scalar function $s(x)$, the gradient $\nabla s(x)$ is \emph{co}-vector field (also known as a differential $1$-form) and can be converted into a vector field using the inverse metric: $M_x^{-1} \nabla s(x)$.
We can see that \autoref{eq:vf} can be expressed concisely in terms of the pullback metric $f^*v=(f^*M)^{-1}f^*(Mv)$, which geometrically corresponds to converting the vector field to a covector field with $M$, pulling back the covector field (using the chain rule), and then converting back to a vector field with the inverse of the pulled back metric. This sequence is illustrated in \autoref{fig:diffgeo_comm}.

From \autoref{eq:metric}, the pullback of the Euclidean metric is $J^\top I J = J^\top J$. From these expressions, we can see that the form of $f^*\frac{dx}{dt}$ should be:

\begin{align*}
    f^*\frac{dx}{dt} &= -\dot\sigma(t)\sigma(t) (J^\top M_{f(\theta)} J)^{-1} J^\top M_{f(\theta)}M_{f(\theta)}^{-1} \nabla\big(\log {dP_t}/{d\lambda_x}\big)\\
    f^*\frac{dx}{dt} &=-\dot\sigma(t)\sigma(t) (J^\top M_{f(\theta)} J)^{-1}\nabla\big(\log {dP_t}/{d\lambda_x}\big)\\
    f^*\frac{dx}{dt} &= -\dot\sigma(t)\sigma(t) (J^\top J)^{-1}\nabla\log p_t \quad \quad \text{(on Euclidean $\mathcal X$ where $M_x=I$)}.
\end{align*}

Thus the pullback of the reverse time PF-ODE is
\begin{equation}
    \frac{d\theta}{dt} = f^* \frac{dx}{dt} = (J^\top J)^{-1}J^\top \frac{dx}{dt}=-\dot{\sigma}(t)\sigma(t) (J^\top J)^{-1}J^\top \nabla \log p_t(f(\theta)).
\end{equation}

\textbf{Change of variables contribution.} Some readers may be surprised to see that the Jacobian log determinant \emph{does not} show up in this transformation. Though somewhat technical, this can be seen by unpacking the Radon-Nikodym derivative ${dP_t}/{d\lambda_x}$. In general, on a manifold with metric $M$, the Lebesgue measure is given by $d\lambda = \sqrt{\det{M_x}}dx$. Therefore, when written in terms of the density $dP_t/dx = p_t(x)$, one has ${dP_t}/{d\lambda_x} = p_t(x)/\sqrt{\det{M_x}}$. When evaluated only in the Euclidean space where the coordinates are chosen such that $M_x=I$, this detail can be safely ignored. However, when pulling back to the parameter space, these hidden terms matter. The Radon-Nikodym derivative ${dP_t}/{d\lambda_x}$ is a scalar field and is not transformed under the pullback, however $p_t$ is a scalar density and satisfies \begin{equation*}
    (f^*p_t)(\theta) = p_t(f(\theta))\sqrt{\det(J^\top J)},
\end{equation*}
the familiar change of variables formula. When the input and output dimensions are the same $p_\Theta(\theta) = p_{\mathcal X}(f(\theta))\det J$. However, $\sqrt{\det M_x}$ is also a scalar density and similarly transforms:
\begin{equation*}
    f^*\sqrt{\det M_x} = \sqrt{\det {f^* M_x}}=\sqrt{\det J^\top M_x J}=\sqrt{\det J^\top J}.
\end{equation*}

Assembling these two components together, we see that the determinant contribution from the change of variables formula cancels with the contribution from the change of the Lebesgue measure in the sampling ODE.
\begin{align*}
    f^*\nabla \log(dP_t/d\lambda_x) &= f^*\nabla \log \big(p_t/\sqrt{\det M_x})\\
    f^*\nabla \log(dP_t/d\lambda_x) &=J^\top\nabla \big(\log(f^*p_t)-\log f^*\sqrt{\det M_x}\big)\\
    f^*\nabla \log(dP_t/d\lambda_x) &=J^\top\nabla \big(\log p_t + \frac{1}{2}\log \det J^\top J -\frac{1}{2}\log \det J^\top M_x J\big)\\
    f^*\nabla \log(dP_t/d\lambda_x) &= J^\top\nabla \log p_t.
\end{align*}

This is why the change of variables term $\frac{1}{2}\log\det J^\top J$ term does not appear in the pullback of the PF-ODE.

\section{Details on constrained sampling}\label{sec:ddim_fwd}
For our experiments, we used the stochastic non-Markovian reverse process from DDIM \citet{song_denoising_2022}.
For the RePaint steps, we also need to take forward steps to encourage the samples that are ``renderable''. In this section, we prove the correct form for the forward process looks like
\begin{align}
    x(t) & = s(t)\wt x_0(t-dt) + \frac{\sigma(t)}{\sigma(t-dt)}\left(\sqrt{\sigma^2(t-dt) - \tau^2(t)}\wt\epsilon_0(t-dt) + \tau(t)\epsilon_f(t)\right). \label{thm1}
\end{align}
Where $\sigma(t)$ is the standard deviation and can be calculated using $\sigma(t) = \frac{1}{\sqrt{1/\alpha^2(t)+1}}$ and $s(t)$ is the scale of signal. It can be calculated using $s(t) = \frac{1}{\sqrt{\alpha^2(t)+1}}$ where $\alpha(t)$ is the noise-to-signal ratio. In the main body of the text, we let $\sigma(t)$ notate the noise-to-signal ratio, and we assume $s(t)=1$ but that is not necessarily true in the general case.

\begin{theorem}
    Eq. \ref{thm1} is the correct forward update for the non-Markovian process.
\end{theorem}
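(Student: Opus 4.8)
\section*{Proof proposal}

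The plan is to derive Eq.~\ref{thm1} as the (essentially unique) Gaussian transition kernel $q_\tau\!\big(x(t)\mid x(t-dt),x(0)\big)$ that is Bayes-consistent with the stochastic DDIM reverse step and with the shared noisy marginals $q_\tau\!\big(x(t)\mid x(0)\big)=\mathcal N\!\big(s(t)x(0),\sigma^2(t)I\big)$. This is what makes a forward update ``correct'' for the non-Markovian process: the reverse of this kernel is by construction the DDIM reverse step, so interleaving forward and reverse steps (as RePaint does) leaves the sampling law invariant, and marginalizing $x(t-dt)$ back out returns the intended $q_\tau(x(t)\mid x(0))$. First I would record the stochastic reverse kernel in the paper's $(s,\sigma)$ parameterization (the $\eta$-interpolated DDIM/DDPM update, with $\tau(t)$ playing the role of $\sigma_t$): writing $\wt\epsilon_0(t):=\big(x(t)-s(t)x(0)\big)/\sigma(t)$,
\begin{equation*}
  q_\tau\!\big(x(t-dt)\mid x(t),x(0)\big)=\mathcal N\!\Big(s(t-dt)x(0)+\sqrt{\sigma^2(t-dt)-\tau^2(t)}\;\wt\epsilon_0(t),\ \tau^2(t)I\Big),
\end{equation*}
which is verbatim the stochastic reverse step run in the sampler.

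Next I would apply Bayes' rule, $q_\tau(x(t)\mid x(t-dt),x(0))\propto q_\tau(x(t-dt)\mid x(t),x(0))\,q_\tau(x(t)\mid x(0))$ (the factor $q_\tau(x(t-dt)\mid x(0))$ does not depend on $x(t)$), and complete the square in $x(t)$. With $u:=x(t)-s(t)x(0)$ and $v:=x(t-dt)-s(t-dt)x(0)$, the exponent becomes $-\tfrac{1}{2\tau^2(t)}\|v-a u\|^2-\tfrac{1}{2\sigma^2(t)}\|u\|^2$ where $a:=\sqrt{\sigma^2(t-dt)-\tau^2(t)}/\sigma(t)$. Collecting the quadratic term, the $\tau^2$ pieces cancel and the precision collapses to $\sigma^2(t-dt)/\big(\sigma^2(t)\,\tau^2(t)\big)$, giving variance $\big(\sigma(t)/\sigma(t-dt)\big)^2\tau^2(t)$; collecting the linear term gives mean $\big(\sigma(t)\sqrt{\sigma^2(t-dt)-\tau^2(t)}/\sigma^2(t-dt)\big)\,v$. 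Finally, identifying $x(0)$ with the denoised component $\wt x_0(t-dt)$ makes $v=\sigma(t-dt)\wt\epsilon_0(t-dt)$ (since $x(t-dt)=s(t-dt)\wt x_0(t-dt)+\sigma(t-dt)\wt\epsilon_0(t-dt)$), and rewriting the Gaussian as $x(t)=s(t)\wt x_0(t-dt)+u$ with $u=\mathrm{mean}+\mathrm{std}\cdot\epsilon_f(t)$ and fresh $\epsilon_f(t)\sim\mathcal N(0,I)$ factors out the common $\sigma(t)/\sigma(t-dt)$ and lands exactly on Eq.~\ref{thm1}.

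The substantive difficulty is bookkeeping rather than any analytic estimate. The main text collapses $s(t)\equiv1$ and overloads $\sigma$ as the noise-to-signal ratio, whereas the theorem is stated in the general variance-preserving parameterization $s(t)=1/\sqrt{\alpha^2(t)+1}$, $\sigma(t)=\alpha(t)/\sqrt{\alpha^2(t)+1}$ (so $s^2+\sigma^2=1$), so every $s$ versus $\sigma$ factor must be carried correctly through the Gaussian algebra. In particular, the two cancellations above — the $\tau^2$ cancellation in the precision and the clean disappearance of the $x(0)$ cross terms — hinge on the means of all three Gaussian factors being precisely $s(\cdot)x(0)$, which is what makes the change of variables to $u,v$ exact rather than approximate; checking this alignment, and that the reverse kernel I start from is exactly the one used in the experiments, is the step I would treat most carefully.
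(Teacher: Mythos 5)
Your proposal is correct and follows essentially the same route as the paper: both invoke Bayes' rule on the three Gaussian kernels $q(x(t-dt)\mid x(t),x(0))$ (the stochastic DDIM reverse step), $q(x(t)\mid x(0))$, and $q(x(t-dt)\mid x(0))$, then complete the square to read off the forward transition. The only difference is cosmetic — you work with the centered variables $u,v$ whereas the paper works directly with $\epsilon_0(t)=u/\sigma(t)$ and $\epsilon_0(t-dt)=v/\sigma(t-dt)$, and your computed precision $\sigma^2(t-dt)/(\sigma^2(t)\tau^2(t))$ and mean $\sigma(t)\sqrt{\sigma^2(t-dt)-\tau^2(t)}\,v/\sigma^2(t-dt)$ match the paper's $\epsilon_0(t\mid t-dt,0)\sim\mathcal N\big(\tfrac{\sqrt{\sigma^2(t-dt)-\tau^2(t)}}{\sigma(t-dt)}\epsilon_0(t-dt),\tfrac{\tau^2(t)}{\sigma^2(t-dt)}\big)$ exactly.
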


\begin{proof}
    We will use the notation $x(t|0) = x(t) = x(t)|x(0), x(t|t-dt,0) = x(t)|x(t-dt),x(0)$, and $x(t-dt|t,0) = x(t-dt)|x(t)x(0)$. The same holds if we replace $x$ with $\epsilon_0$.

    Using the reparametrization trick we can write $x(t) = x(t|0) = s(t)x(0) + \sigma(t)\epsilon_0(t|0)$,
    so $$\ln p(x(t|0)) = -\frac{1}{2}\epsilon_0^2(t|0) + c.$$

    Using DDIM and the reparameterization trick, we can also write
    \begin{align*}
        \ln p(x(t-dt|t,0)) & = -\frac{1}{2}\left(\frac{1}{\tau(t)}x(t-dt) - \frac{s(t-dt)}{\tau(t)}x(0) - \frac{\sqrt{\sigma^2(t-dt)-\tau^2(t)}}{\tau(t)}\epsilon_0(t)\right)^2 + c \\
                           & = -\frac{1}{2}\left(\frac{\sigma(t-dt)}{\tau(t)}\epsilon_0(t-dt) - \frac{\sqrt{\sigma^2(t-dt)-\tau^2(t)}}{\tau(t)}\epsilon_0(t)\right)^2 + c
    \end{align*}

    Now we can use Bayes' theorem
    \begin{align*}
        p(x(t|t-dt,0))     & = \frac{p(x(t-dt|t,0))p(x(t|0))}{p(x(t-dt|0))}                                                                                                             \\
        \ln p(x(t|t-dt,0)) & = -\frac{1}{2}\Biggl(
        \left(\frac{\sigma(t-dt)}{\tau(t)}\epsilon_0(t-dt) - \frac{\sqrt{\sigma^2(t-dt)-\tau^2(t)}}{\tau(t)}\epsilon_0(t)\right)^2
        + \epsilon_0^2(t) - \epsilon_0^2(t-dt)
        \Biggr) + c                                                                                                                                                                     \\
                           & = -\frac{1}{2}\left(\frac{\sigma(t-dt)}{\tau(t)}\epsilon_0(t) - \frac{\sqrt{\sigma^2(t-dt) - \tau^2(t)}}{\tau(t)}\epsilon_0(t-dt)\right)^2 + c             \\
                           & = -\frac{1}{2}\cdot\frac{\sigma^2(t-dt)}{\tau^2(t)}\left(\epsilon_0(t) - \frac{\sqrt{\sigma^2(t-dt)-\tau^2(t)}}{\sigma(t-dt)}\epsilon_0(t-dt)\right)^2 + c
    \end{align*}

    Using reparameterization, we can say that (conditioned on $x(0)$), $x(t|t-dt,0) = s(t)x(0) + \sigma(t)\epsilon_0(t|t-dt,0)$, where
    \begin{align*}
        \epsilon_0(t|t-dt,0) & \sim \mathcal N\left(\frac{\sqrt{\sigma^2(t-dt)-\tau^2(t)}}{\sigma(t-dt)}\epsilon_0(t-dt), \frac{\tau^2(t)}{\sigma^2(t-dt)}\right)
    \end{align*}
\end{proof}

\section{Additional experimental details}\label{sec:exp_details}

\begin{algorithm}[t]
\caption{DDRep with RePaint}
\label{alg:ddrep}
\begin{algorithmic}[1]

\State \textbf{Input:} $\texttt{cfg}$, $\texttt{reverse\_steps}$, $\texttt{jump\_interval}$, $\texttt{jump\_len}$, $\texttt{jump\_repeat}$, $\eta$
\State \textbf{Init:} $\theta = \arg \min_\theta \E_{\pi\sim \Pi} \| f(\theta, \pi) \|$, $\epsilon(\pi) \sim \mathcal{N}(0, I), \forall \pi \in \Pi$
\State $\texttt{schedule} = \texttt{RePaintSchedule}(\texttt{reverse\_steps}, \texttt{jump\_interval}, \texttt{jump\_len}, \texttt{jump\_repeat})$

\For{(\texttt{reverse}, $t$) \textbf{in} $\texttt{schedule}$}
    \State $\sigma_{\text{langevin}} = \eta\sqrt{\sigma^{-2}(t)+1}\sqrt{1-\frac{\sigma^2(t-\Delta t)+1}{\sigma^2(t)+1}}$
    \State $\epsilon_\text{langevin} \sim \mathcal N(0, \sigma^2_\text{langevin})$
    \State \textbf{Sample Views:} $\pi \sim \Pi$
    \If{\texttt{reverse}}
        \State $x(t) = f(\theta, \pi) + \sigma(t) \epsilon(\pi)$
        \State $\hat\epsilon_t(\pi) = \hat\epsilon(x(t), t, \texttt{view\_prompt}(\pi), \texttt{cfg})$
        \State $\wh x_{\text{next}} = x(t) - \sigma(t) \hat\epsilon_t(\pi) + \sqrt{1-\sigma^2_\text{langevin}}\sigma(t-\Delta t)\hat\epsilon_t(\pi) + \sigma(t-\Delta t)\epsilon_\text{langevin}$
    \Else
        \State $\wh x_{\text{next}} = f(\theta, \pi) + \sigma(t) \sqrt{1-\sigma^2_\text{langevin}}\epsilon(\pi) + \sigma(t) \epsilon_\text{langevin}$
    \EndIf
    \State \textbf{Update Noise Parameters:} $\epsilon(\pi) = \sqrt{1-\sigma^2_\text{langevin}}\epsilon(\pi) + \epsilon_\text{langevin}$
    \State \textbf{Update Diffrep Parameters:} $\theta = \arg \min_\theta \| f(\theta, \pi) -  \widehat{x}_{\text{next}} + \sigma_\text{next}\epsilon(\pi) \|$
\EndFor

\end{algorithmic}
\end{algorithm}

This section presents additional details and runtime metrics for our algorithm with the experimental settings given in \autoref{sec:exps}.
The pseudocode for the algorithm is given in \autoref{alg:ddrep}.

\paragraph{Image SIRENs}
Sampling a batch of eight reference images using SDv1.5 takes 39 seconds. Generating SIRENs using SJC depends on the number of iterations used; running SJC for 3000 iterations to generate eight SIRENs takes 694 seconds, corresponding to a per-iteration time of 2.31 seconds. In comparison, sampling a batch of eight SIRENs using our method takes 82 seconds. These timings demonstrate that our method offers a significant speed advantage over SJC while producing comparable results to the reference SDv1.5 samples. 

The PSNR, SSIM, and LPIPS scores of our method against the SDv1.5 reference images are presented in \autoref{tab:imsim}. 

\begin{table}[t]
    \centering
    \begin{tabular}{lrrrrr}
\toprule
cfg scale & 0 & 3 & 10 & 30 & 100 \\
\midrule
PSNR $\uparrow$ & 29.712 ± 4.231 & 29.931 ± 5.767 & 27.593 ± 6.574 & 23.453 ± 6.397 & 13.586 ± 4.024 \\
SSIM $\uparrow$ & 0.899 ± 0.086 & 0.896 ± 0.107 & 0.888 ± 0.115 & 0.826 ± 0.145 & 0.523 ± 0.183 \\
LPIPS $\downarrow$ & 0.044 ± 0.050 & 0.053 ± 0.074 & 0.061 ± 0.074 & 0.098 ± 0.098 & 0.336 ± 0.138 \\
\bottomrule
\end{tabular}
\vspace{1mm}
    \caption{Image similarity scores (mean $\pm$ std) for the images produced using our method and those produced using the SD reference.}
    \label{tab:imsim}
\end{table}

\paragraph{SIREN Panoramas}
Generating a SIREN panorama with our method takes 218 seconds. Additional examples of SIREN panoramas generated using our method are provided in \autoref{fig:add_pano}.


\begin{figure}[t]
    \centering
    \includegraphics[width=\textwidth]{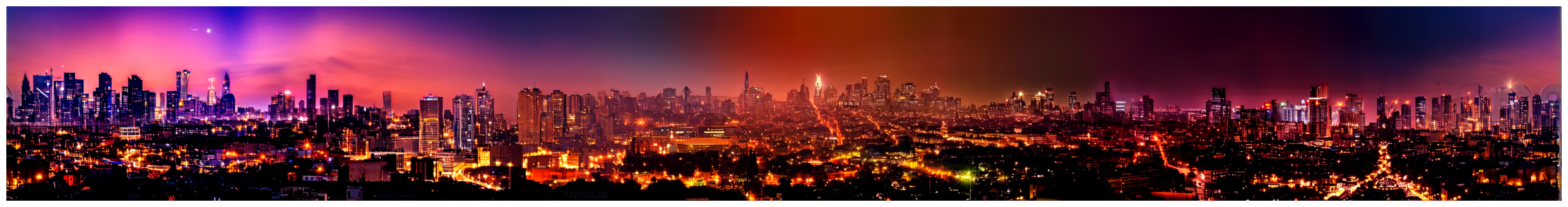}
    \includegraphics[width=\textwidth]{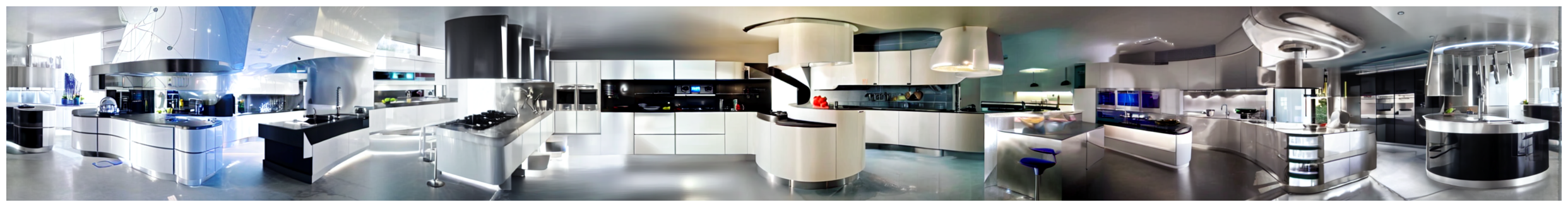}
    \includegraphics[width=\textwidth]{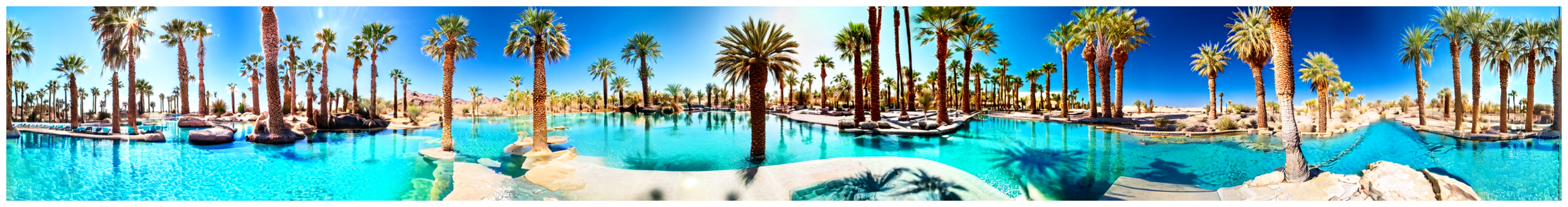}
    \caption{Additional SIREN Panoramas with prompts (top) ``Urban skyline at twilight, city lights twinkling in the distance." (middle) ``A futuristic kitchen" (bottom) ``Desert oasis, palm trees surrounding a crystal-clear pool."}
    \label{fig:add_pano}
\end{figure}

\paragraph{3D NeRFs}
Our algorithm processed a batch of eight views per iteration, which was the maximum capacity for the A6000 VRAM. The generation of each NeRF required approximately 7.2 seconds per NFE (Number of Function Evaluations). We implemented 100 inference steps using the DDIM procedure, with one RePaint forward step for each reverse step. This configuration resulted in 199 NFEs per NeRF, translating to about 24 minutes of sampling time for each NeRF. It is worth noting that run times can fluctuate based on the specific sampling procedure and GPU architecture used.
For comparison, generating a comparable NeRF using SJC takes approximately 34 minutes for 10,000 steps, based on the code provided by the authors.

\end{document}